\newcommand\numberthis{\addtocounter{equation}{1}\tag{\theequation}}
\newcommand\ci{\perp\!\!\!\perp} 
\DeclareMathOperator*{\argmax}{arg\,max}
\newtheorem{theorem}{Theorem}[section]
\newcommand{\citet}[1]{\citeauthor{#1} \shortcite{#1}}
\title{Hypernym Detection Using Strict Partial Order Networks}
\author{
  Sarthak Dash, Md Faisal Mahbub Chowdhury, Alfio Gliozzo,   \\
  {\bf \Large Nandana Mihindukulasooriya \and  Nicolas Rodolfo Fauceglia} \\
 IBM Research AI, Yorktown Heights, NY, USA\\
 sdash@us.ibm.com, mchowdh@us.ibm.com, gliozzo@us.ibm.com \\
 nandana.m@ibm.com, nicolas.fauceglia@ibm.com\\
}
\begin{document}

\maketitle

\begin{abstract}
This paper introduces Strict Partial Order Networks (SPON), a novel neural network architecture designed to enforce asymmetry and transitive properties as soft constraints. We apply it to induce hypernymy relations by training with \emph{is-a} pairs. We also present an \textit{augmented} variant of SPON that can generalize type information learned for in-vocabulary terms to previously unseen ones. An extensive evaluation over eleven benchmarks across different tasks shows that SPON consistently either outperforms or attains the state of the art on all but one of these benchmarks.
\end{abstract}

\section{Introduction}

The ability to generalize the meaning of domain-specific terms is essential for many NLP applications. However, building taxonomies by hand for a new domain is time-consuming. This drives the requirement to develop automatic systems that are able to identify \textit{hypernymy} relationships (i.e.\ \emph{is-a} relations) from text.

Hypernymy relation is reflexive and transitive but not symmetric \cite{george1990introduction,hearst1992automatic}. For example, if \emph{Wittgenstein} $\prec$ \emph{philosopher} and \emph{philosopher} $\prec$ \emph{person}, where $\prec$ means \emph{is-a}, it follows that \emph{Wittgenstein} $\prec$ \emph{person} (\emph{transitivity}). In addition, it also follows that both \emph{philosopher} $\nprec$ \emph{Wittgenstein} and \emph{person} $\nprec$ \emph{philosopher} (\emph{asymmetry}). Absence of self-loops within taxonomies (e.g.\ WordNet \cite{george1990introduction}) emphasizes that reflexivity (e.g.\ \emph{person} $\prec$ \emph{person}) does not add any new information.

In \emph{order theory}, a \texttt{partial order} is a binary relation that is transitive, reflexive and anti-symmetric. A \texttt{strict partial order} is a binary relation that is transitive, irreflexive and asymmetric. \emph{Strict partial orders} correspond more directly to \emph{directed acyclic graphs} (\texttt{DAGs}). In fact, hypernymy relation hierarchy in WordNet is a \emph{DAG} \cite{suchanek2008yago}. Therefore, we hypothesize that the \emph{Hypernymy} relations within a taxonomy can be better represented via \emph{strict partial order} relations. 

In this paper we introduce \texttt{Strict Partial Order Networks (SPON)}, a neural network architecture comprising of \emph{non-negative} activations and \emph{residual} connections designed to enforce strict partial order as a soft constraint. We present an implementation of SPON designed to learn \emph{is-a} relations. The input of SPON is a list of \emph{is-a} pairs, provided either by applying Hearst-like patterns over a text corpus or via a list of manually validated pairs.

In order to identify hypernyms for out-of-vocabulary (OOV) terms, i.e. terms that are not seen by SPON during the training phase, we present an \textit{augmented} variant of SPON that can generalize type information learned for the in-vocabulary terms to previously unseen ones. The \textit{augmented} model does so by using normalized distributional similarity values as weights within a probabilistic model, the details of which are described in Section \ref{oovterms}.

The main contributions of this paper are the following:
\begin{itemize}
    \item We introduce the idea of Strict Partial Order Network (SPON), highlighting differences and similarities with previous approaches aimed at the same task.
    \item A theoretical analysis shows SPON enforces asymmetry and transitivity requirement as soft constraints .
    \item An \textit{augmented variant} of SPON to predict hypernyms for OOV is proposed.
    \item Compared to previous approaches, we demonstrate that our system achieves and/or improves the state of the art (SOTA) consistently across a large variety of hypernymy tasks and datasets (multi-lingual and domain-specific), including supervised and unsupervised settings.
\end{itemize}

The rest of the paper is structured as follows. Section \ref{relatedwork} describes related work. SPON is introduced in Section \ref{SPON}, and theoretical analysis is provided in Section \ref{theoreticalanalysis}. In Section \ref{oovterms} we show how SPON can be \textit{augmented} for OOV terms in the test dataset. Section \ref{unsupervised} and \ref{supervised} describe the evaluation setup and results. Section \ref{conclusion} concludes the paper and highlights perspectives for future work.

\section{Related Work}\label{relatedwork}
Since the pioneering work of \citet{hearst1992automatic}, lexico-syntactic pattern-based approaches (e.g., ``NP$_y$ is a NP$_x$'')  remains influential in subsequent academic and commercial applications. Some work tried to learn such patterns automatically~\cite{snow2005learning,shwartz2016improving} instead of using a predefined list of patterns. 

Among other notable work, \citet{kruszewski2015deriving} proposed to map concepts into a boolean lattice. \citet{lin2002concept} approached the problem by clustering entities.  \citet{dalvi2012websets} proposed to combine clustering with Hearst-like patterns. There also exist approaches \cite{weeds2004characterising,roller2016relations,shwartz2017hypernyms}  inspired by the Distributional Inclusion Hypothesis (DIH) \cite{geffet2005distributional}. 

\citet{fu2014learning} argued that hypernym-hyponym pairs preserve linguistic regularities such as $v(shrimp)-v(prawn)$
$\approx$ $v(fish)-v(goldfish)$, where $v(w)$ is the embedding of the word $w$. In other words, they claimed that a hyponym word can be projected to its hypernym word learning a transition matrix $\Phi$. \citet{tan2015usaar} proposed a deep neural network based approach to learn \emph{is-a} vectors that can replace $\Phi$. 

Recently, \citet{roller2018hearst} showed that exploitation of matrix factorization (MF) on a Hearst-like pattern-based system's output vastly improved their results (for different hypernymy tasks; in multiple datasets) with comparison to that of both distributional and non-MF pattern-based approaches.

Another thread of related work involves the use of graph embedding techniques for representing a hierarchical structure. Order-embeddings \cite{Vendrov:2016} encode text and images with embeddings, preserving a \emph{partial order} (i.e. $x \preceq y$, where x is a specific concept and y is a more general concept) over individual embedding dimensions using the \textit{Reversed Product Order} on $\mathbb{R}^N_+$. In contrast, our proposed neural network based model encodes a \emph{strict partial order} through a composition of \textit{non-linearities} and \textit{residual} connections. This allows our model to be as \textit{expressive} as possible, all the while maintaining strict partial order.

\citet{LiXiang:2017} extended the work of \citet{Vendrov:2016} by augmenting distributional co-occurrences with order embeddings. In addition, hyperbolic embeddings model tree structures using non-euclidean geometries, and can be viewed as a continuous generalization of the same \cite{nickel2017poincare}. Other recent works have induced hierarchies using box-lattice structures \cite{vilnis2018probabilistic} and Gaussian Word Embeddings \cite{athiwaratkun2018hierarchical}.

Regarding the recent SOTA, for unsupervised setting where manually annotated (i.e. gold standard) training data is not provided, \citet{le2019Hyperbolic} proposed a new method combining hyperbolic embeddings and Hearst-like patterns, and obtained significantly better results on several benchmark datasets. 

For supervised setting, during the SemEval-2018 hypernymy shared task \cite{camacho-collados-etal-2018-semeval}, the \texttt{CRIM} system \cite{bernier2018crim} obtained best results on English datasets (General English, Medical and Music). This system combines supervised projection learning with a Hearst-like pattern-based system's output. In the same shared task, for Italian, the best system, \texttt{300-sparsans}, was a logistic regression model based on sparse coding and a formal concept hierarchy obtained from word embeddings \cite{berend2018300}; whereas for Spanish, the best system, \texttt{NLP\_HZ} was based on the nearest neighbors algorithm \cite{qiu2018nlp_hz}.

In Sections \ref{unsupervised} and \ref{supervised} we compare our approach with all of the above mentioned recent SOTA in both unsupervised and supervised settings, respectively.

\section{Strict Partial Order Networks}
\label{SPON}

The goal of SPON is to estimate the probability for a distinct pair of elements $x,y \in \mathcal{E}$ to be related by a strict partial order $x \prec y$. A specific instance of this problem is the hypernym detection problem, where $\mathcal{E}$ is a vocabulary of terms and $\prec$ is the \emph{is-a} relation. In this section, we present a SPON implementation, while a theoretical analysis of how the proposed architecture satisfies transitive and asymmetric properties is described in the next section.

An implementation of a SPON is illustrated in Figure \ref{architecture}.
Each term $x \in \mathcal{E}$ is represented via a vector $\vec{x} \in \mathbb{R}^d$. In the first step, we perform an element-wise multiplication with a weight vector $w_1$ and then add to a bias vector $b_1$. The next step consists of a standard \textit{ReLU} layer, that applies the transformation $ReLU(v) = max(0, v)$. Let us denote these transformations by a smooth function $g$,
\begin{equation} \label{reludesc}
g(\vec{x}) = ReLU(w_1 \otimes \vec{x} + b_1)
\end{equation}
where $\otimes$ denote element-wise multiplication. 

The final step, as depicted in Figure~\ref{architecture}, consists of a residual connection, i.e. 
\begin{equation} \label{resconn}
f(\vec{x}) = \vec{x}+g(\vec{x})
\end{equation}


\begin{figure}[ht]
    \centering
    \begin{tikzpicture}[
    roundnode/.style={circle, draw=black!60, very thick, minimum size=7mm},
    squarednode/.style={rectangle, draw=black!60, very thick, minimum size=5mm},
    point/.style={circle, draw=black!5, ultra thin, minimum size=0.1mm},
    on grid
    ]
    \node[squarednode]      (inpvector)                     {Input vector $\vec{x}$};
    \node[roundnode]        (elemmul)       [above=12mm of inpvector] {\bf{$\otimes$}};
    \node[roundnode]      (elemadd)       [above=12mm of elemmul] {\bf{+}};
    \node[squarednode]      (bias)          [left=20mm of elemadd] {Bias Vector $b_1$};
    \node[squarednode]      (relu)          [above=12mm of elemadd] {ReLU Layer};
    \node[squarednode]      (wmatrix)       [left=20mm of elemmul] {Weight Vector $w_1$};
    \node[roundnode]      (elemadd2)       [above=12mm of relu] {\bf{+}};
    \node[squarednode]     (loss)          [above=12mm of elemadd2] {Loss Layer};
    \node[point] (p1)        [right=20mm of inpvector] {};
    \node[point] (p2)        [right=20mm of elemadd2] {};
    
    \draw[->] (inpvector.north) -- (elemmul.south);
    \draw[->] (elemmul.north) -- (elemadd.south); 
    \draw[->] (elemadd.north) -- (relu.south);
    \draw[->] (bias.east) -- (elemadd.west);
    \draw[->] (wmatrix.east) -- (elemmul.west);
    \draw[->] (relu.north) -- (elemadd2.south); 
    \draw[->] (elemadd2.north) -- (loss.south); 
    \draw[-]  (inpvector.east) -- (p1.west) -- (p2.west); 
    \draw[->] (p2.west) -- (elemadd2.east);
    \end{tikzpicture}
    \caption{Simple SPON architecture.}
    \label{architecture}
\end{figure}
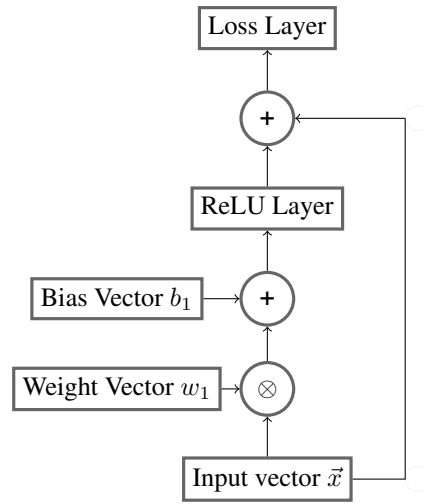

We encode the \textit{loss layer} to capture the \textit{distance-to-satisfaction} $\psi$ for a given candidate hyponym-hypernym pair ($x,y$), defined as follows:
\begin{equation}\label{per_instance_loss}
\psi (x,y) = \sum_{i=1}^d max(0, \epsilon + f(\vec{x})_i - \vec{y}_i)
\end{equation}
where the sum is taken over all the components of the participating dimensions, and $\epsilon$ is a scalar hyper-parameter.

The network is trained by feeding positive and negative examples derived from a training set $\mathcal{T}$ containing \emph{is-a} relations and their corresponding scores. Each positive training instance consists of a pair $(x, \mathcal{H}_x)$, where $\mathcal{H}_x$ is the set of candidate hypernyms of $x$ in the training data. Negative instances for a given term $x$, denoted by $\mathcal{H}^{\prime}_x$, are generated by selecting terms uniformly at random from $\mathcal{E}$.
More formally, for a given candidate hyponym term $x$, let 
\begin{equation} \label{cand-hyp-terms}
\mathcal{H}_x = \{e \in \mathcal{E} \;| \;(x, e, s) \in \mathcal{T}\}
\end{equation}
denote all the candidate hypernym terms of $x$, and let 
\begin{equation} \label{neg-hyp-samples}
\mathcal{H}^{\prime}_x = \{e \in \mathcal{E} | (x, e, .) \notin \mathcal{T}\}
\end{equation}
denote negative hypernym samples for $x$. 
Negative hypernym terms are sampled at random from $\mathcal{E}$, and as many negative samples are generated that satisfy $|\mathcal{H}_x| + |\mathcal{H}^{\prime}_x| = k$, a constant (hyper-parameter for the model).

The probability of ($x, y$) being a true hyponym-hypernym pair is then calculated using an approach analogous to \textit{Boltzmann distribution} as follows,
\begin{equation}\label{probability}
p(x,y) = \frac{e^{-\psi (x,y)}}{\sum_{z \in \mathcal{H}_x \cup \mathcal{H}^{\prime}_x} e^{-\psi (x, z)}}
\end{equation}

Equation \ref{probability} is used for training, while during scoring, the probability that a pair $(x,y)$ exhibits \textit{hypernymy} relationship is given by, 
\begin{equation}
p(x,y) = \frac{e^{-\psi (x,y)}}{\sum_{z \in \mathcal{H}} e^{-\psi (x,z)}}
\end{equation}\label{scoring}

whereas, the most likely hypernym term $y^*$ for a given hyponym term $x$ is given by,
\begin{equation}
    y^* = \argmax_{y \in \mathcal{H}} \frac{e^{-\psi (x,y)}}{\sum_{z \in \mathcal{H}} e^{-\psi (x,z)}}
\end{equation}

Here, $\mathcal{H}$ denotes the list of all hypernym terms observed in the training set $\mathcal{T}$.

Finally, we define the loss function $\mathcal{J}$ using a weighted negative log-likelihood criterion (w-NLL) defined as follows, 
\begin{equation}
\mathcal{J} = - \sum_{(x,y,s) \in \mathcal{T}} s \log p(x,y)
\end{equation}
where $s$ represents the relative importance of the loss associated with pair $(x,y) \in \mathcal{T}$. 

\section{Theoretical Analysis}
\label{theoreticalanalysis}

\textit{Hypernymy} relations within a taxonomy satisfy two properties: \textit{asymmetry} and \textit{transitivity}. The \textit{asymmetry} property states that given two distinct terms $x, y \in \mathcal{E}$, if $x \prec y$ then $y \nprec x$. 
The \textit{transitive} property states that given three distinct terms $x,y,z \in \mathcal{E}$, if $x$ $\prec$ $y$ and $y$ $\prec$ $z$ then $x$ $\prec$ $z$. 

In this section we analytically demonstrate that the neural network architecture depicted in Fig. \ref{architecture}, whose forward pass expressions are given by equations \ref{reludesc} and \ref{resconn}, satisfy \textit{asymmetry} and \textit{transitive} properties. 

As described by equation \ref{per_instance_loss}, our proposed model assigns a zero loss for a given hyponym-hypernym pair $(x,y)$ if the learned model satisfies $f(\vec{x}) < \vec{y}$ element-wise. This formulation of the \emph{loss layer} puts forth the following constraint that defines our model, 

\begin{equation} \label{modelineq}
x \prec y \iff f(\vec{x})_i <  \vec{y}_i,  \; \forall i
\end{equation}

In other words, the relation $\prec$ is satisfied if and only if $f(\vec{x}) < \vec{y}$, component-wise. 
In the rest of this section, we show that under the assumption of [\ref{modelineq}], our proposed model for \emph{hypernymy} relation satisfies \emph{asymmetry} and \emph{transitivity}.

\begin{theorem}
Expression \ref{modelineq} satisfies asymmetry.
\end{theorem}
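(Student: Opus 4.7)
The plan is to exploit two structural features of the architecture: the non-negativity of the $\mathrm{ReLU}$ output in equation \ref{reludesc}, and the residual connection in equation \ref{resconn}. These together imply the pointwise inequality $f(\vec{x})_i \geq \vec{x}_i$ for every coordinate $i$, which is the workhorse lemma for the whole argument.

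First I would record, as a preliminary observation, that $g(\vec{x})_i = \mathrm{ReLU}(w_1 \otimes \vec{x} + b_1)_i \geq 0$ for every $i$, purely by definition of $\mathrm{ReLU}(v) = \max(0,v)$. Combining this with $f(\vec{x}) = \vec{x} + g(\vec{x})$ gives the componentwise bound $f(\vec{x})_i \geq \vec{x}_i$, valid for every $\vec{x} \in \mathbb{R}^d$ and every index $i$.

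Next I would proceed by contradiction. Suppose that both $x \prec y$ and $y \prec x$ hold for two distinct terms $x,y \in \mathcal{E}$. By the model-defining equivalence \ref{modelineq}, the first relation gives $f(\vec{x})_i < \vec{y}_i$ for all $i$, and combining with the preliminary bound $\vec{x}_i \leq f(\vec{x})_i$ yields $\vec{x}_i < \vec{y}_i$ for all $i$. Applying the same reasoning to $y \prec x$ gives $\vec{y}_i < \vec{x}_i$ for all $i$. The two chains of inequalities are mutually inconsistent at any single coordinate, producing the contradiction.

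There is essentially no technical obstacle here: the proof is a two-line observation once the residual-plus-$\mathrm{ReLU}$ structure is recognized as forcing $f(\vec{x}) \geq \vec{x}$. The only subtlety worth being explicit about is that the inequality in \ref{modelineq} is strict while the bound coming from non-negativity of $\mathrm{ReLU}$ is weak; this is precisely what is needed to turn the chain $\vec{x} \leq f(\vec{x}) < \vec{y}$ into the strict comparison $\vec{x} < \vec{y}$ required to clash with its reverse.
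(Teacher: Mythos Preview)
Your argument is correct and essentially mirrors the paper's proof: both hinge on the lemma $f(\vec{x}) \geq \vec{x}$ componentwise (from non-negativity of $\mathrm{ReLU}$ plus the residual connection) and then chain it with the strict inequality from \ref{modelineq}. The only cosmetic difference is that the paper argues directly that $f(\vec{y}) \geq \vec{y} > f(\vec{x}) \geq \vec{x}$, whereas you phrase the same chain as a contradiction between $\vec{x} < \vec{y}$ and $\vec{y} < \vec{x}$.
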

\begin{proof}
Let $x \prec y$. Then, it follows expression \ref{modelineq} that $f(\vec{x}) < \vec{y}$ component wise. We need to show that $y \nprec x$. Using the definition of equation \ref{modelineq}, it is enough to show $f(\vec{y}) \geq \vec{x}$ component wise. 

Now, using equation \ref{resconn}, we have $f(\vec{y}) = \vec{y}+g(\vec{y})$. From the definition of function $g$, it is clear that $g(\vec{x}) \geq 0$ component wise. Thus, applying this inequality to the previous expression, we have $f(\vec{y}) \geq \vec{y}$ component wise. On similar lines, we can also show that 
\begin{equation} \label{asymmetry_proof}
f(\vec{x}) \geq \vec{x} \;\; \forall x \in \mathcal{E}
\end{equation}
component wise.

We now have $f(\vec{y}) \geq \vec{y} > f(\vec{x}) \geq \vec{x}$ component wise. The middle inequality holds, since we assume $x \prec y$; in other words, $f(\vec{x}) < \vec{y}$ holds component wise. Thus expression \ref{modelineq} satisfies asymmetry. 

\end{proof}

\begin{theorem}
Expression \ref{modelineq} satisfies transitivity.
\end{theorem}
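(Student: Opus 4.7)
The plan is to unpack both hypotheses through the defining inequality \ref{modelineq}, then chain them together using the auxiliary inequality \ref{asymmetry_proof} that was established in the previous proof.

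Concretely, I would first assume $x \prec y$ and $y \prec z$, which by \ref{modelineq} give, componentwise,
\begin{equation*}
f(\vec{x}) < \vec{y} \qquad \text{and} \qquad f(\vec{y}) < \vec{z}.
\end{equation*}
The goal, again by \ref{modelineq}, reduces to showing $f(\vec{x}) < \vec{z}$ componentwise. Next I would invoke \ref{asymmetry_proof}, namely $f(\vec{v}) \geq \vec{v}$ for every $v \in \mathcal{E}$, applied to $v = y$: this yields $\vec{y} \leq f(\vec{y})$. Chaining the three inequalities gives
\begin{equation*}
f(\vec{x}) < \vec{y} \leq f(\vec{y}) < \vec{z}
\end{equation*}
componentwise, from which $f(\vec{x}) < \vec{z}$ is immediate and hence $x \prec z$ by \ref{modelineq}.

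There is essentially no obstacle here, since the hard work was already done in establishing \ref{asymmetry_proof} (which in turn rested only on the non-negativity of $g$ guaranteed by the \emph{ReLU} activation together with the residual connection). The only thing to be careful about is that all inequalities are interpreted componentwise and that strictness is preserved along the chain, which it is because a strict inequality composed with a non-strict one remains strict. Thus the argument is a short three-line chain of inequalities once the right lemma from the asymmetry proof is recycled.
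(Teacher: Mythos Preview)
Your proposal is correct and follows essentially the same argument as the paper: unpack the two hypotheses via \ref{modelineq}, invoke the inequality $f(\vec{e}) \geq \vec{e}$ from \ref{asymmetry_proof} at $e=y$, and chain $f(\vec{x}) < \vec{y} \leq f(\vec{y}) < \vec{z}$ componentwise. Your added remark about strictness being preserved along the chain is a nice clarification that the paper leaves implicit.
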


\begin{proof}
Let $x \prec y$ and $y \prec z$. Then, it follows from expression \ref{modelineq} that $f(\vec{x}) < \vec{y}$ and $f(\vec{y}) < \vec{z}$, component wise. We need to show that $x \prec z$ or, alternatively, that $f(\vec{x}) < \vec{z}$. 

Generalizing equation \ref{asymmetry_proof}, we have,
$\forall e \in \mathcal{E}, f(\vec{e}) \geq \vec{e}$ component wise. Using this observation, we have $f(\vec{x}) < \vec{y} \leq f(\vec{y})< \vec{z}$ component wise. Note that the middle inequality holds from the aforementioned observation. This proves that Expression \ref{modelineq} satisfies transitivity. 
\end{proof}

\section{Generalizing SPON to OOV} \label{oovterms}

The proposed SPON model is able to learn embedding for terms appearing in the training data (extracted either using Hearst-like patterns or provided via a manually labelled training set). However, for tasks wherein one needs to induce \textit{hypernymy} relationships automatically from a text corpus, Hearst-like patterns usually are not exhaustive.

Yet, there is often a practical requirement in most applications to assign 
OOV to their most likely correct type(s). Designing a system that fulfills this requirement is highly significant since it allows the creation of \textit{hypernymy} relationships from a given text corpus, avoiding the problem of sparsity that often characterizes most knowledge bases. The basic idea is to use an \textit{augmented} SPON approach that leverages distributional similarity metrics between words in the same corpus. This is formally described as follows.

For a given domain, let $\mathcal{I}^{trial}$ and $\mathcal{O}^{trial}$ denote the in-vocabulary and OOV input trial hyponym terms; and let $\mathcal{I}^{test}$ and $\mathcal{O}^{test}$ denote the in-vocabulary and OOV input test hyponym terms respectively. Let $\mathcal{E}^{train}$ denote all the terms observed in the list of training \textit{hyponym-hypernym} pairs, and let $\mathcal{H}^{train}$ denote the list of known hypernyms obtained from the list of training pairs. The hyponym terms from $\mathcal{I}^{trial}$ and $\mathcal{I}^{test}$ are handled by our proposed SPON model, i.e. top-ranked hypernyms for each hyponym term are generated via our model.

The rest of this section deals with how to generate top ranked hypernyms for each hyponym term within $\mathcal{O}^{trial}$ and $\mathcal{O}^{test}$ respectively. Let $Y_x$ be the random variable denoting the hypernym assignment for an OOV term $x \in \mathcal{O}^{test}$ (Similar approach holds for OOV terms from $\mathcal{O}^{trial}$). The probability of the random variable $Y_x$ taking on the value $c \in \mathcal{H}^{train}$ is then given by,
\begin{multline*}
    P(Y_x = c | x) = \sum_{h \in \mathcal{E}^{train}} P(Y_x=c, h | x) \\ 
                   = \sum_{h \in \mathcal{E}^{train}} P(Y_x=c|h,x).P(h|x) \numberthis \label{oov_eq1}
\end{multline*}    

The first equality in the above expression is a direct consequence of \textit{Marginalisation} property in probability, whereas the second equality merely represents the \textit{marginal} probability in terms of \textit{conditional} probability. \newline
We now make a \textit{conditional independence} assumption, i.e. $Y_x \ci x \;|\; h $, or in other words, ignoring the subscript for brevity we have,  $P(Y_x = c |h,x) = P(Y=c|h)$. Using this assumption, we can rewrite Equation \ref{oov_eq1} as,
\begin{align*}
    P(Y_x = c | x) &= \sum_{h \in \mathcal{E}^{train}} P(Y=c|h).P(h|x) \\ 
                   &\approx \sum_{h \in S^{p}_x} P(Y=c|h).f(h|x) \numberthis \label{oov_eq2}
\end{align*}
where $f$ is a scoring function that provides a score between $[0,1]$, and $S^{p}_x$ contains \textit{p-terms} from $\mathcal{E}^{train}$ that provide top-k largest values for the scoring function $f$. In practice, we first normalize the values of $f(h|x)$ where $ h \in S^{p}_x$ using a \textit{softmax} operation, before computing the weighted sum, as per Equation \ref{oov_eq2}. Also, note that $p$ is a hyper-parameter in this model.

Looking back at Equation \ref{oov_eq2}, we notice that the first part of the summation, i.e. $P(Y=c|h)$, can be obtained directly from our proposed SPON model, since $h \in \mathcal{E}^{train}$. In addition, we model the function $f(h|x)$ as cosine-similarity between the vectors for the term $h$ and $x$, wherein the vectors are trained via a standard Word2Vec model pre-built on the corresponding tokenized corpus for the given benchmark dataset.

Summarizing, given a query OOV term within the \emph{trial} or \emph{test} fold of any dataset, our proposed model follows the aforementioned strategy to generate a list of \emph{hypernym} terms that have been ranked using the formula in Equation \ref{oov_eq2}.

It should be \emph{clearly} pointed out that our aforementioned proposed OOV strategy is not a stand-alone strategy, rather its performance is inherently \emph{dependent} of SPON.


\section{Unsupervised Benchmarks and Evaluation}
\label{unsupervised}

SPON is intrinsically supervised because it requires example \emph{is-a} pairs for training. However, it can also be applied to unsupervised hypernymy task, provided that example \emph{is-a} pairs are generated by an external unsupervised process such as Hearst-like patterns.

\subsection{Benchmarks}
In the unsupervised setting, no gold training data is provided and the system is supposed to assess the validity of test data, provided as a set of pairs of words. A small validation dataset is also provided which is used for tuning hyper-parameters.

We evaluated our approach on two tasks. The first one is \emph{hypernym detection} where the goal is to classify whether a given pair of terms are in a hypernymy relation. The second task is \emph{direction prediction}, i.e. to identify which term in a given pair is the hypernym. We use the \textit{same datasets, same settings, same evaluation script and same evaluation metrics} as \citet{roller2018hearst}. Table \ref{UnsupervisedDataSets} shows the dataset statistics for unsupervised benchmarks, wherein the split into validation/test folds is already given.\footnote{The only exception to this is BIBLESS dataset comprising of 1669 pairs, for which the split is not provided a priori.} 

For detection, Average Precision is reported on 5 datasets, namely BLESS \cite{baroni2011we}, LEDS \cite{baroni2012entailment}, EVAL \cite{santus2015evalution}, WBLESS \cite{weeds2014learning} and SHWARTZ \cite{shwartz2016improving}. While for direction, Average Accuracy is reported on 3 datasets, which are BIBLESS \cite{kiela2015exploiting}, BLESS and WBLESS. We refer the readers to \citet{roller2018hearst} for details about these datasets.

\begin{table}
    \centering 
    \begin{tabular}{c|c|c}
    \hline
    Dataset & Valid & Test  \\ \hline \hline
    BLESS  & 1,453 & 13,089 \\ 
    EVAL & 736 & 12,714 \\ 
    LEDS & 275 & 2,495 \\ 
    SHWARTZ & 5,236 & 47,321 \\ 
    WBLESS & 167 & 1,501 \\ 
    \hline  
    \end{tabular}
    \caption{Statistics for benchmark datasets used in unsupervised \emph{hypernym detection} and \emph{direction prediction} tasks. The columns represent the number of \emph{hyponym-hypernym} pairs within the \emph{validation} and \emph{test} folds respectively.}
    \label{UnsupervisedDataSets}
\end{table}

\begin{table}
    \centering
    \begin{tabular}{|c|c|p{2cm}|p{2cm}|}
      \hline
        & \small English & \small Italian/Spanish & \small Music/Medical \\ \hline \hline
        Train & 1500 & \centering 1000 & 500 \\ \hline
        Trial & 50 & \centering 25 & 15  \\ 
        Test & 1500 & \centering 1000 & 500 \\
      \hline
    \end{tabular} 
    \caption{Number of hyponyms in different datasets within SemEval 2018 \emph{hypernym discovery} task.}
\label{tab:SemEval18_dataset}
\end{table}

\begin{figure}[ht]
   \centering
   \includegraphics[width=0.9\columnwidth]{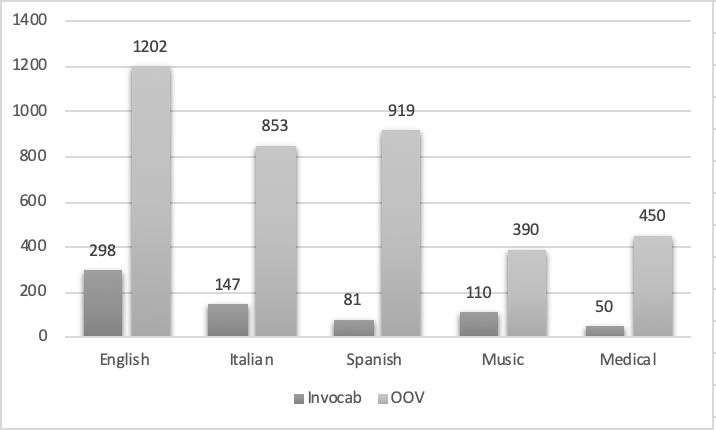}
   \caption{Breakdown of hyponym terms within the test fold for each dataset in the hypernym discovery task. By \emph{Invocab} we mean hyponym terms within \textit{test} fold that have been observed while training SPON, whereas by OOV we mean new hyponym terms that have been exclusively observed for the first time in \textit{test} fold and not seen during \textit{training}.}
   \label{oov}
 \end{figure}

\begin{table*}
    \centering
    \begin{tabular}{p{3cm}|c|c|c|c|c||c|c|c}
      \hline
      & \multicolumn{5}{c|}{Detection (Average Precision)} &  \multicolumn{3}{|c}{Direction (Average Accuracy)} \\ \hline
      & \small BLESS & \small EVAL & \small LEDS & \small SHWARTZ & \small WBLESS & \small BLESS & \small WBLESS & \small BIBLESS  \\ \hline \hline
      Count based p(x,y) & .49 & .38 & .71 & .29 & .74 & .46 & .69 & .62 \\
      ppmi(x,y) & .45 & .36 & .70 & .28 & .72 & .46 & .68 & .61 \\
      SVD ppmi(x,y) & .76 & .48 & .84 & .44 & .96 & .96 & .87 & .85 \\
      HyperbolicCones & \bf .81 & \bf .50 & .89 & \bf .50 & \bf .98 & .94 & .90 & .\bf 87 \\ \hline
      Proposed SPON & \bf .81 & \bf .50 & \bf .91 & \bf .50 & \bf .98 & \bf .97 & \bf .91 & \bf .87 \\
      \hline
    \end{tabular} 
    \caption{Results on the unsupervised \emph{hypernym detection} and \emph{direction prediction} tasks. The first three rows of results are from \citet{roller2018hearst}. The \textit{HyperbolicCones} results were reported by \citet{le2019Hyperbolic}. The improvements in LEDS and BLESS benchmark are statistically significant with \emph{two-tailed p values} being 0.019 and $\leq$ 0.001 respectively.}
    \label{ResultsTable}
\end{table*}

\begin{table}[ht]
    \centering
    \begin{tabular}{c|c|c|c|c}
    \hline 
    & \small BLESS & \small EVAL & \small LEDS & \small WBLESS \\ \hline \hline 
    \small \emph{RELU}+\emph{Residual} & \bf .81 & \bf .50 & \bf .91 & \bf .98  \\
    \small \emph{RELU} Only & .73 & .49 & .82 & .96 \\ 
    \small \emph{Tanh}+\emph{Residual} & .79 & .49 & .90 & \bf .98 \\ 
    \hline 
    \end{tabular}
    \caption{Ablation tests reporting \emph{Average Precision} values on the unsupervised \emph{hypernym detection} task, signifying the choice of \emph{layers} utilized in our proposed SPON model. The first row represents SPON i.e.\ a \emph{RELU} layer followed by a \emph{Residual} connection. The second row removes the \emph{Residual} connection, whereas the third row substitutes the \emph{non-negative} activation layer \emph{RELU} with \emph{Tanh} that can take negative values.}
    \label{tab:Ablation1}
\end{table}

\begin{table}[ht]
    \centering
    \begin{tabular}{c|c}
    \hline 
    Method & Average Precision \\ \hline 
    OE \cite{Vendrov:2016} & 0.761 \\ 
    Smoothed Box \cite{li2018smoothing} & 0.795 \\ 
    SPON (Our Approach) & \bf 0.811 \\ 
    \hline 
    \end{tabular}
    \caption{Results on the unsupervised \emph{hypernym detection} task for BLESS dataset. With 13,089 test instances, the improvement in Average Precision values obtained by SPON as compared against Smoothed Box model is statistically significant with \emph{two-tailed p value} equals $0.00116$.}
    \label{tab:compareWithICLRPubs}
\end{table}

\begin{table*}
    \centering
    \begin{tabular}{c|c|c|c||c|c|c||c|c|c}
      \hline
      & \multicolumn{3}{c|}{\bf English} & \multicolumn{3}{|c|}{\bf Spanish} & \multicolumn{3}{|c}{\bf Italian} \\ \hline
      & \small MAP & \small MRR & \small P@5 & \small MAP & \small MRR & \small P@5 & \small MAP & \small MRR & \small P@5 \\ \hline \hline
      CRIM & 19.78 & 36.10 & 19.03 & -- & -- & -- & -- & -- & -- \\ \hline
      NLP\_HZ & 9.37 & 17.29 & 9.19 & 20.04 & 28.27 & 20.39 & 11.37 & 19.19 & 11.23 \\ \hline
      300-sparsans & 8.95 & 19.44 & 8.63 & 17.94 & 37.56 & 17.06 & 12.08 & 25.14 & 11.73 \\ \hline \hline
      SPON & \bf 20.20 & \bf 36.95 & \bf 19.40 & \textbf{32.64} & \textbf{50.48} & \textbf{32.76}& \textbf{17.88} &\textbf{29.80} & \textbf{17.95} \\ \hline
    \end{tabular} 
    \caption{Results on SemEval 2018 General-purpose hypernym discovery task. \texttt{CRIM}, \texttt{NLP\_HZ}, and \texttt{300-sparsans} are the corresponding best systems on English, Spanish and Italian datasets (see Section \ref{relatedwork}).}
    \label{tab:results_general}
\end{table*}

\begin{table}[t]
    \centering 
    \begin{tabular}{|c|c|c|c|}
      \hline
      & \multicolumn{3}{c|}{\bf Music} \\ \hline
      & \small MAP & \small MRR & \small P@5 \\ \hline
      CRIM &  40.97 & 60.93 & 41.31 \\ \hline
      SPON & \textbf{54.70} & \textbf{71.20} & \textbf{56.30} \\  \hline
      & \multicolumn{3}{|c|}{\bf Medical} \\ \hline
      & \small MAP & \small MRR & \small P@5 \\ \hline
      CRIM & \textbf{34.05} & \textbf{54.64} & \textbf{36.77} \\ \hline
      SPON & 33.50 & 50.60 & 35.10 \\ \hline
    \end{tabular} 
    \caption{Results on SemEval 2018 Domain-specific hypernym discovery task. \texttt{CRIM} is the best system on the domain specific datasets.}
    \label{tab:results_domain}
\end{table}

We adopted the approach of \citet{roller2018hearst} where a list $\mathcal{L}$ of hyponym-hypernym pairs $(x, y)$ is extracted using a Hearst-like pattern-based system. This system consists of 20 Hearst-like patterns applied to a concatenation of Wikipedia and Gigaword corpora, to generate $\mathcal{L}$.

Each pair within $\mathcal{L}$ is associated with a count $c$ (how often $(x, y)$ has been extracted). Positive Mutual Information (PPMI) \cite{bullinaria2007extracting} for each \emph{(x, y)} is then calculated. Let the size of $\mathcal{E}$ be $m$ and let $M \in  \mathbb{R}^{m \times m}$ be the PPMI matrix. We use a similar scoring strategy as \citet{roller2018hearst}, i.e. truncated SVD approach to generate term embeddings, and score each pair using cosine similarity. This creates a modified list $\mathcal{T}$, which is the input for SPON (as mentioned in Section \ref{SPON}).

In order to be directly comparable to \citet{roller2018hearst} and \citet{le2019Hyperbolic}, we used the \emph{same input file} of \citet{roller2018hearst} containing candidate hyponym-hypernym-count triples, i.e.\ a total of 431,684 triples extracted using Hearst-like patterns from a combination of Wikipedia and Gigaword corpora. 

We used the following hyper-parameter configuration for the \emph{rank} parameter of the SVD based models: 50 for BLESS, WBLESS, and BIBLESS; 25 for EVAL, 100 for LEDS and 5 for SHWARTZ. Optimal hyper-parameter configurations for our proposed SPON model were determined empirically using validation fold for the benchmark datasets.

For each experiment, the embedding dimensions \emph{d} were chosen out of \{100, 200, 300, 512, 1024\}, whereas the $\epsilon$ parameter was chosen out of $\{10^{-1}, 10^{-2}, 10^{-3}, 10^{-4}\}$. 
$k$ is set to 1000 for all experiments. For example, in Table \ref{ResultsTable}, the SPON model used the following hyper-parameters on BLESS dataset: $d=300, \epsilon=0.01$.

In addition, we used $L_1$ regularization for model weights, and also used \emph{dropout} with \emph{probability} of $0.5$. Adam optimizer was used with default settings. In addition, the term vectors in our model were initialized uniformly at random, and are constrained to have unit $L_2$ norm during the entire training procedure. Furthermore, an early stopping criterion of 20 \emph{epochs} was used.

\subsection{Evaluation}
We use the same evaluation script as provided by \citet{roller2018hearst} for evaluating our proposed model. Table \ref{ResultsTable} shows the results on the unsupervised tasks of \emph{hypernym detection} and \emph{direction predictions}, reporting average precision and average accuracy, respectively. 

The first row titled \emph{Count based} (in Table \ref{ResultsTable}) depicts the performance of a  Hearst-like Pattern system baseline, that uses a \emph{frequency} based threshold to classify candidate hyponym-hypernym pairs as positive (i.e.\ exhibiting \emph{hypernymy}) or negative (i.e.\ not exhibiting \emph{hypernymy}). The \emph{ppmi} approach in Table \ref{ResultsTable} builds upon the \emph{Count based} approach by using Pointwise Mutual Information values for classification. \emph{SVD ppmi} approach, the main contribution from \citet{roller2018hearst} builds low-rank embeddings of the PPMI matrix, which allows to make predictions for unseen pairs as well. 

\emph{HyperbolicCones} is the SOTA \cite{le2019Hyperbolic} in both these tasks. The final row reports the application of SPON (on the input provided by SVD ppmi) which is an original contribution of our work. Results clearly show that SPON achieves SOTA results on all datasets. In fact, on three datasets, SPON outperforms \emph{HyperbolicCones}. Furthermore, improvements in LEDS and BLESS benchmarks are statistically significant with \emph{two-tailed p values} being 0.019 and $\leq$ 0.001 respectively.

A plausible explanation for this improved performance might be due to the fact that \emph{hypernymy} relationships are better represented as \emph{Directed Acyclic Graphs} (DAGs) rather than trees \cite{suchanek2008yago}, and we believe that SPON is more suitable to represent \emph{hypernymy} relationships as opposed to \emph{HyperbolicCones} in which the constant negative curvature strongly biases the model towards trees. 

\paragraph{Ablation Tests.} The \emph{analysis} in Section \ref{theoreticalanalysis} which shows that our choice of function $f$ satisfies \emph{asymmetry} and \emph{transitive} properties, holds true because $f$ satisfies $f(\vec{x}) \geq \vec{x}$ component-wise. We have chosen to define $f$ as a non-negative activation function \emph{RELU} followed by a \emph{Residual} layer. In this section, we perform \emph{two} sets of ablation experiments, \emph{first} where we remove the \emph{Residual} connections altogether, and \emph{second} where we replace the non-negative activation function \emph{RELU} with \emph{Tanh} that can take on negative values. 

Table \ref{tab:Ablation1} shows the results for each of these ablation experiments, when evaluated on the unsupervised \emph{hypernym detection} task across \emph{four} datasets chosen randomly. Removing the \emph{Residual} layer and using \emph{RELU} activation function only, violates the aforementioned component-wise inequality $f(\vec{x}) \geq \vec{x}$, and has the worst results out of the three. On the other hand, using \emph{Residual} connections with \emph{Tanh} activations may not violate the aforementioned inequality, since, it depends upon the \emph{sign} of the activation outputs. This \emph{argument} is supported by the results in Table \ref{tab:Ablation1}, wherein using \emph{Tanh} activations instead of \emph{RELU} almost provides \emph{identical} results, except for the BLESS dataset. 
Nevertheless, the results in Table \ref{tab:Ablation1} show that encouraging \emph{asymmetry} and \emph{transitive} properties for this \emph{task}, in fact improves the results as opposed to not doing the same.

Furthermore, Table \ref{tab:compareWithICLRPubs} illustrates the results on the unsupervised \emph{hypernym detection} task for BLESS dataset, wherein we compare our proposed SPON model to other supervised SOTA approaches for hypernym prediction task, namely Order Embeddings (OE) approach as introduced by \cite{Vendrov:2016}, and Smoothed Box model as introduced by \cite{li2018smoothing}. We run the OE and Smoothed Box experiments using the codes provided with those papers. 

In addition, we used the validation fold within BLESS dataset to empirically determine optimal hyper-parameter configurations, and settled on the following values: For OE, we used an embedding dimensions of 20, margin parameter of 5, generated \emph{one} negative example for every positive instance using so-called \emph{contrastive} approach. For Smoothed Box model, we used an embedding dimensions of 50 and generated \emph{five} negatives per training instance. In either case, we observed that using the entire set of \emph{is-a} pairs extracted by the Hearst-like patterns (without employing a frequency based cutoff) for training provided the best performance.    

From Table \ref{tab:compareWithICLRPubs}, it is clear that SPON performs much better (by atleast 1.6\%) as compared to Smoothed Box model as well as Order Embedding model in an Unsupervised benchmark dataset.

\begin{table}
    \centering
    \resizebox{.95\columnwidth}{!}{
    \begin{tabular}{c|c}
      \hline
        \bf {Term} & \textbf{Predicted hypernyms} \\ \hline \hline
        dicoumarol & \small \underline{drug}, carbohydrate, acid, person, ... \\ 
        Planck & \small \textbf{person}, \underline{particle}, physics, \underline{elementary particle}, ...  \\ 
        Belt Line & \small main road, \textbf{infrastructure}, expressway, ... \\
        relief & \small service, assistance, resource, ... \\ \hline
        honesty & \small \underline{virtue}, ideal, moral philosophy, ... \\
        shoe & \small \textbf{footwear}, shoe, \textbf{footgear}, overshoe, ... \\ 
        ethanol & \small \textbf{alcohol}, \underline{fuel}, person, \textbf{fluid}, resource, ... \\ 
        ruby & \small \underline{language}, \underline{precious stone}, \underline{person}, ... \\ 
     
    \end{tabular}}
    \caption{Examples of ranked predictions (from left-to-right) made by our system on a set of eight \emph{randomly} selected test queries from SemEval 2018 English dataset. The top four query terms are OOV, while the bottom ones are In-vocabulary. Hypernyms predicted by SPON that matches the gold annotations are highlighted in \textbf{bold}, while we use \underline{underline} for predictions that we judge to be correct but are missing in the gold standard expected hypernyms.}
    
\label{tab:SemEval18_analysis}
\end{table}

\section{Supervised Benchmarks and Evaluation}
\label{supervised}

In the supervised setting, a system has access to a large corpus of text from where training, trial, and test \emph{is-a} pairs are extracted, and labeled manually.

\subsection{Benchmarks}

We used the benchmark of \textit{SemEval 2018 Task on Hypernym Discovery}. The task is defined as \textit{``given an input term, retrieve its hypernyms from a target corpus"}. For each input hyponym in the test data, a ranked list of candidate hypernyms is expected. The benchmark consists of five different subtasks covering both general-purpose (multiple languages -- English, Italian, and Spanish) and domain-specific (Music and Medicine domains) hypernym discovery . 

\subsection{Experimental Settings}\label{exp_settings}

This subsection describes the technical solution we implemented for the SemEval tasks, more specifically, the strategies for training dataset augmentation, handling OOV terms, and the hyperparameter optimization.

For the corpora in English, we augmented the training data using automatically extracted pairs by an unsupervised Hearst-like Pattern (HP) based system, following an approach similar to that described by \citet{bernier2018crim}, the best system in English, Medical and Music hypernymy subtasks in SemEval 2018.  Henceforth, we refer to our pattern-based approach as the \texttt{HP} system. 

The HP system uses a fixed set of Hearst-like patterns (e.g. \emph{``y such as x''}, \emph{``y including than $x_1$, $x_2$''}, etc) to extract pairs from the input corpus. Then, it filters out any of these pairs where either \emph{x} (hyponym) or \emph{y} (hypernym) is not seen in the corresponding vocabulary provided by the organizers of the shared task. It also discards any pair where \emph{y} is not seen in the corresponding gold training data. 

Following that, the HP system makes a directed graph by considering each pair as an edge and the corresponding terms inside pair as nodes. The weight of each edge is the count/frequency of how often \emph{(x, y)} has been extracted by the Hearst-like patterns.

It also excludes any cycle inside the graph; e.g. if \emph{(x, y)}, \emph{(y, z)} and \emph{(z, x)} then all these edges (i.e. pairs) were discarded. Finally, it discards any edge that has a value lower than a frequency threshold, \emph{ft}. We set \emph{ft}=10 for \emph{English}. For \emph{Medical} and \emph{Music}, we set \emph{ft}=2.

The \emph{is-a} pairs obtained from HP system are then merged with the corresponding training gold pairs (i.e. treated them equally) to form a larger training set for English, Medical and Music datasets. As a result of this step, the number of total unique training pairs for English, Medical and Music increased to 17,903 (from 11,779 training gold pairs), 4,593 (from 3,256) and 6,282 (from 5,455) correspondingly. 

The dataset statistics for the \textit{general-purpose} and \textit{domain-specific} hypernym discovery tasks are mentioned in Table \ref{tab:SemEval18_dataset}. It is evident that a significant fraction of the terms in the Trial/Test fold is OOV (see Figure~\ref{oov}), therefore SPON is not able to make any assessment about them. Therefore, in order to handle OOV cases, we represent all terms in the dictionary provided by the SemEval organizers via Word2Vec vectors acquired from the given text corpus.

The \textit{dimensions} $d$ for SPON model was chosen from $\{50, 100, 200\}$, whereas the parameter $p$ (for handling OOV terms) was chosen from $\{2,3,5,8,10\}$. Parameter $k$ (from Equation \ref{neg-hyp-samples}) was chosen from $\{100,200,500\}$. The \textit{regularization}, \textit{dropout} and \textit{initialization} strategies are exactly similar to Section \ref{exp_settings}. An early stopping criterion of \textit{50 epochs} was used. 

\subsection{Evaluation}

We use the \emph{scorer} script provided as part of \emph{SemEval-2018} Task 9 for evaluating our proposed model. Table~\ref{tab:results_general} shows the results on the three general purpose domains of English, Spanish, and Italian respectively. For brevity, we compare only with the SOTA, \textit{i.e.}, the best system in each task. Performances of all the systems that participated in \textit{SemEval 2018 Task} on Hypernym Discovery can be found in \cite{camacho-collados-etal-2018-semeval}.  Similarly, Table~\ref{tab:results_domain} shows the results on the two domain-specific tasks of music and medical domain corpora. SPON outperforms the SOTA systems in all tasks except for the medical domain in which it achieves comparable results. It is worthwhile to notice that SPON is fully domain-agnostic, \textit{i.e.}, it neither uses any domain-specific approaches, nor any domain-specific external knowledge. 
We provide an illustration of the output of our system in Table \ref{tab:SemEval18_analysis}, showing a sample of randomly selected terms and their corresponding ranked predictions.

\section{Conclusion and Future Work}
\label{conclusion}

In this paper, we introduced SPON, a novel neural network architecture that models \textit{hypernymy} as a strict partial order relation. We presented a materialization of SPON, along with an \textit{augmented} variant that assigns \textit{types} to OOV \textit{terms}. An extensive evaluation over several widely-known academic benchmarks demonstrates that SPON largely improves (or attains) SOTA values across different tasks.

There are so many benchmark datasets for hypernymy prediction task with different evaluation settings (supervised and unsupervised). None of the recent approaches choose to report results on all of them which makes it difficult to decide whether any one of them performs consistently well over others. Our paper fills this void.

In the future, we plan to explore how to extend SPON in two directions. On the one hand, we plan to analyze how to use SPON for the taxonomy construction task (\textit{i.e.}, constructing a hierarchy of hypernyms instead of flat \emph{is-a} pairs). On the other hand, we plan to generalize our work to relations other than \emph{is-a}.

\small   
\bibliography{AAAI-DashS.9092}

\begin{thebibliography}{}

\bibitem[\protect\citeauthoryear{Athiwaratkun and
  Wilson}{2018}]{athiwaratkun2018hierarchical}
Athiwaratkun, B., and Wilson, A.~G.
\newblock 2018.
\newblock Hierarchical density order embeddings.
\newblock In {\em ICLR}.

\bibitem[\protect\citeauthoryear{Baroni and Lenci}{2011}]{baroni2011we}
Baroni, M., and Lenci, A.
\newblock 2011.
\newblock How we blessed distributional semantic evaluation.
\newblock In {\em Proceedings of the GEMS 2011 Workshop on GEometrical Models
  of Natural Language Semantics},  1--10.
\newblock Association for Computational Linguistics.

\bibitem[\protect\citeauthoryear{Baroni \bgroup et al\mbox.\egroup
  }{2012}]{baroni2012entailment}
Baroni, M.; Bernardi, R.; Do, N.-Q.; and Shan, C.-c.
\newblock 2012.
\newblock Entailment above the word level in distributional semantics.
\newblock In {\em Proceedings of the 13th Conference of the European Chapter of
  the Association for Computational Linguistics},  23--32.
\newblock Association for Computational Linguistics.

\bibitem[\protect\citeauthoryear{Berend, Makrai, and
  F{\"o}ldi{\'a}k}{2018}]{berend2018300}
Berend, G.; Makrai, M.; and F{\"o}ldi{\'a}k, P.
\newblock 2018.
\newblock 300-sparsans at semeval-2018 task 9: Hypernymy as interaction of
  sparse attributes.
\newblock In {\em Proceedings of The 12th International Workshop on Semantic
  Evaluation},  928--934.

\bibitem[\protect\citeauthoryear{Bernier-Colborne and
  Barriere}{2018}]{bernier2018crim}
Bernier-Colborne, G., and Barriere, C.
\newblock 2018.
\newblock Crim at semeval-2018 task 9: A hybrid approach to hypernym discovery.
\newblock In {\em Proceedings of The 12th International Workshop on Semantic
  Evaluation},  725--731.

\bibitem[\protect\citeauthoryear{Bullinaria and
  Levy}{2007}]{bullinaria2007extracting}
Bullinaria, J.~A., and Levy, J.~P.
\newblock 2007.
\newblock Extracting semantic representations from word co-occurrence
  statistics: A computational study.
\newblock {\em Behavior research methods} 39(3):510--526.

\bibitem[\protect\citeauthoryear{Camacho-Collados \bgroup et al\mbox.\egroup
  }{2018}]{camacho-collados-etal-2018-semeval}
Camacho-Collados, J.; Delli~Bovi, C.; Espinosa~Anke, L.; Oramas, S.; Pasini,
  T.; Santus, E.; Shwartz, V.; Navigli, R.; and Saggion, H.
\newblock 2018.
\newblock {S}em{E}val-2018 task 9: Hypernym discovery.
\newblock In {\em SemEval}.

\bibitem[\protect\citeauthoryear{Dalvi, Cohen, and
  Callan}{2012}]{dalvi2012websets}
Dalvi, B.~B.; Cohen, W.~W.; and Callan, J.
\newblock 2012.
\newblock Websets: Extracting sets of entities from the web using unsupervised
  information extraction.
\newblock In {\em Proceedings of the fifth ACM international conference on Web
  search and data mining},  243--252.
\newblock ACM.

\bibitem[\protect\citeauthoryear{Fu \bgroup et al\mbox.\egroup
  }{2014}]{fu2014learning}
Fu, R.; Guo, J.; Qin, B.; Che, W.; Wang, H.; and Liu, T.
\newblock 2014.
\newblock Learning semantic hierarchies via word embeddings.
\newblock In {\em Proceedings of the 52nd Annual Meeting of the Association for
  Computational Linguistics (Volume 1: Long Papers)},  1199--1209.

\bibitem[\protect\citeauthoryear{Geffet and
  Dagan}{2005}]{geffet2005distributional}
Geffet, M., and Dagan, I.
\newblock 2005.
\newblock The distributional inclusion hypotheses and lexical entailment.
\newblock In {\em Proceedings of the 43rd Annual Meeting of the Association for
  Computational Linguistics (ACL’05)},  107--114.

\bibitem[\protect\citeauthoryear{George \bgroup et al\mbox.\egroup
  }{1990}]{george1990introduction}
George, M.; Fellbaum, R. B.~C.; Gross, C.; and Miller, K.
\newblock 1990.
\newblock Introduction to wordnet: an on-line lexical database.
\newblock {\em International Journal of Lexicography} 3(4):235--244.

\bibitem[\protect\citeauthoryear{Hearst}{1992}]{hearst1992automatic}
Hearst, M.~A.
\newblock 1992.
\newblock Automatic acquisition of hyponyms from large text corpora.
\newblock In {\em Proceedings of the 14th conference on Computational
  linguistics-Volume 2},  539--545.
\newblock Association for Computational Linguistics.

\bibitem[\protect\citeauthoryear{Kiela \bgroup et al\mbox.\egroup
  }{2015}]{kiela2015exploiting}
Kiela, D.; Rimell, L.; Vuli{\'c}, I.; and Clark, S.
\newblock 2015.
\newblock Exploiting image generality for lexical entailment detection.
\newblock In {\em Proceedings of the 53rd Annual Meeting of the Association for
  Computational Linguistics and the 7th International Joint Conference on
  Natural Language Processing (Volume 2: Short Papers)},  119--124.

\bibitem[\protect\citeauthoryear{Kruszewski, Paperno, and
  Baroni}{2015}]{kruszewski2015deriving}
Kruszewski, G.; Paperno, D.; and Baroni, M.
\newblock 2015.
\newblock Deriving boolean structures from distributional vectors.
\newblock {\em Transactions of the Association for Computational Linguistics}
  3:375--388.

\bibitem[\protect\citeauthoryear{Le \bgroup et al\mbox.\egroup
  }{2019}]{le2019Hyperbolic}
Le, M.; Roller, S.; Papaxanthos, L.; Kiela, D.; and Nickel, M.
\newblock 2019.
\newblock Inferring concept hierarchies from text corpora via hyperbolic
  embeddings.
\newblock In {\em ACL}.

\bibitem[\protect\citeauthoryear{Li \bgroup et al\mbox.\egroup
  }{2019}]{li2018smoothing}
Li, X.; Vilnis, L.; Zhang, D.; Boratko, M.; and McCallum, A.
\newblock 2019.
\newblock Smoothing the geometry of probabilistic box embeddings.
\newblock In {\em 7th International Conference on Learning Representations,
  {ICLR} 2019, New Orleans, LA, USA, May 6-9, 2019}.

\bibitem[\protect\citeauthoryear{Li, Vilnis, and McCallum}{2017}]{LiXiang:2017}
Li, X.; Vilnis, L.; and McCallum, A.
\newblock 2017.
\newblock Improved representation learning for predicting commonsense
  ontologies.
\newblock In {\em ICML Workshop on Deep Structured Prediction}.

\bibitem[\protect\citeauthoryear{Lin and Pantel}{2002}]{lin2002concept}
Lin, D., and Pantel, P.
\newblock 2002.
\newblock Concept discovery from text.
\newblock In {\em COLING 2002: The 19th International Conference on
  Computational Linguistics}.

\bibitem[\protect\citeauthoryear{Nickel and Kiela}{2017}]{nickel2017poincare}
Nickel, M., and Kiela, D.
\newblock 2017.
\newblock Poincar{\'e} embeddings for learning hierarchical representations.
\newblock In {\em Advances in neural information processing systems},
  6338--6347.

\bibitem[\protect\citeauthoryear{Qiu \bgroup et al\mbox.\egroup
  }{2018}]{qiu2018nlp_hz}
Qiu, W.; Chen, M.; Li, L.; and Si, L.
\newblock 2018.
\newblock Nlp\_hz at semeval-2018 task 9: a nearest neighbor approach.
\newblock In {\em Proceedings of The 12th International Workshop on Semantic
  Evaluation},  909--913.

\bibitem[\protect\citeauthoryear{Roller and Erk}{2016}]{roller2016relations}
Roller, S., and Erk, K.
\newblock 2016.
\newblock Relations such as hypernymy: Identifying and exploiting hearst
  patterns in distributional vectors for lexical entailment.
\newblock In {\em Proceedings of the 2016 Conference on Empirical Methods in
  Natural Language Processing},  2163--2172.

\bibitem[\protect\citeauthoryear{Roller, Kiela, and
  Nickel}{2018}]{roller2018hearst}
Roller, S.; Kiela, D.; and Nickel, M.
\newblock 2018.
\newblock Hearst patterns revisited: Automatic hypernym detection from large
  text corpora.
\newblock In {\em Proceedings of the 56th Annual Meeting of the Association for
  Computational Linguistics (Volume 2: Short Papers)},  358--363.

\bibitem[\protect\citeauthoryear{Santus \bgroup et al\mbox.\egroup
  }{2015}]{santus2015evalution}
Santus, E.; Yung, F.; Lenci, A.; and Huang, C.-R.
\newblock 2015.
\newblock Evalution 1.0: an evolving semantic dataset for training and
  evaluation of distributional semantic models.
\newblock In {\em Proceedings of the 4th Workshop on Linked Data in
  Linguistics: Resources and Applications},  64--69.

\bibitem[\protect\citeauthoryear{Shwartz, Goldberg, and
  Dagan}{2016}]{shwartz2016improving}
Shwartz, V.; Goldberg, Y.; and Dagan, I.
\newblock 2016.
\newblock Improving hypernymy detection with an integrated path-based and
  distributional method.
\newblock In {\em Proceedings of the 54th Annual Meeting of the Association for
  Computational Linguistics (Volume 1: Long Papers)},  2389--2398.

\bibitem[\protect\citeauthoryear{Shwartz, Santus, and
  Schlechtweg}{2017}]{shwartz2017hypernyms}
Shwartz, V.; Santus, E.; and Schlechtweg, D.
\newblock 2017.
\newblock Hypernyms under siege: Linguistically-motivated artillery for
  hypernymy detection.
\newblock In {\em Proceedings of the 15th Conference of the European Chapter of
  the Association for Computational Linguistics: Volume 1, Long Papers},
  65--75.

\bibitem[\protect\citeauthoryear{Snow, Jurafsky, and
  Ng}{2005}]{snow2005learning}
Snow, R.; Jurafsky, D.; and Ng, A.~Y.
\newblock 2005.
\newblock Learning syntactic patterns for automatic hypernym discovery.
\newblock In {\em Advances in neural information processing systems},
  1297--1304.

\bibitem[\protect\citeauthoryear{Suchanek, Kasneci, and
  Weikum}{2008}]{suchanek2008yago}
Suchanek, F.~M.; Kasneci, G.; and Weikum, G.
\newblock 2008.
\newblock Yago: A large ontology from wikipedia and wordnet.
\newblock {\em Web Semantics: Science, Services and Agents on the World Wide
  Web} 6(3):203--217.

\bibitem[\protect\citeauthoryear{Tan, Gupta, and van
  Genabith}{2015}]{tan2015usaar}
Tan, L.; Gupta, R.; and van Genabith, J.
\newblock 2015.
\newblock Usaar-wlv: Hypernym generation with deep neural nets.
\newblock In {\em Proceedings of the 9th International Workshop on Semantic
  Evaluation (SemEval 2015)},  932--937.

\bibitem[\protect\citeauthoryear{Vendrov \bgroup et al\mbox.\egroup
  }{2016}]{Vendrov:2016}
Vendrov, I.; Kiros, R.; Fidler, S.; and Urtasun, R.
\newblock 2016.
\newblock Order-embeddings of images and language.
\newblock In {\em 4th International Conference on Learning Representations,
  {ICLR} 2016, San Juan, Puerto Rico, May 2-4, 2016, Conference Track
  Proceedings}.

\bibitem[\protect\citeauthoryear{Vilnis \bgroup et al\mbox.\egroup
  }{2018}]{vilnis2018probabilistic}
Vilnis, L.; Li, X.; Murty, S.; and McCallum, A.
\newblock 2018.
\newblock Probabilistic embedding of knowledge graphs with box lattice
  measures.
\newblock In {\em Proceedings of the 56th Annual Meeting of the Association for
  Computational Linguistics (Volume 1: Long Papers)},  263--272.

\bibitem[\protect\citeauthoryear{Weeds \bgroup et al\mbox.\egroup
  }{2014}]{weeds2014learning}
Weeds, J.; Clarke, D.; Reffin, J.; Weir, D.; and Keller, B.
\newblock 2014.
\newblock Learning to distinguish hypernyms and co-hyponyms.
\newblock In {\em Proceedings of COLING 2014, the 25th International Conference
  on Computational Linguistics: Technical Papers},  2249--2259.
\newblock Dublin City University and Association for Computational Linguistics.

\bibitem[\protect\citeauthoryear{Weeds, Weir, and
  McCarthy}{2004}]{weeds2004characterising}
Weeds, J.; Weir, D.; and McCarthy, D.
\newblock 2004.
\newblock Characterising measures of lexical distributional similarity.
\newblock In {\em COLING 2004: Proceedings of the 20th International Conference
  on Computational Linguistics},  1015--1021.

\end{thebibliography}
\bibliographystyle{aaai}

\end{document}